\def\ps@pprintTitle{%
 \let\@oddhead\@empty
 \let\@evenhead\@empty
 \def\@oddfoot{\centerline{\thepage}}%
 \let\@evenfoot\@oddfoot}
\newtheorem{theorem}{Theorem}
\newtheorem*{remark}{Remark}
\begin{document}

\begin{frontmatter}

\title{Superensemble Classifier for Improving Predictions in Imbalanced Datasets}
\author{Tanujit Chakraborty\footnote[1]{\textit{Corresponding author}:
Tanujit Chakraborty (tanujit\_r@isical.ac.in)}
\href{https://orcid.org/0000-0002-3479-2187}{\includegraphics[scale=1]{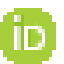}}
and Ashis Kumar Chakraborty\textsuperscript{2}\\
    {\scriptsize \textsuperscript{1 and 2} SQC and OR Unit, Indian Statistical Institute, 203, B. T. Road, Kolkata - 700108, India}\\}

\begin{abstract}
Learning from an imbalanced dataset is a tricky proposition. Because
these datasets are biased towards one class, most existing
classifiers tend not to perform well on minority class examples.
Conventional classifiers usually aim to optimize the overall
accuracy without considering the relative distribution of each
class. This article presents a superensemble classifier, to tackle
and improve predictions in imbalanced classification problems, that
maps Hellinger distance decision trees (HDDT) into radial basis
function network (RBFN) framework. Regularity conditions for
universal consistency and the idea of parameter optimization of the
proposed model are provided. The proposed distribution-free model
can be applied for feature selection cum imbalanced classification
problems. We have also provided enough numerical evidence using
various real-life data sets to assess the performance of the
proposed model. Its effectiveness and competitiveness with respect
to different state-of-the-art models are shown.
\end{abstract}

\begin{keyword}
Ensemble classifier; imbalanced data; hellinger distance; decision
tree; radial basis function network.
\end{keyword}

\end{frontmatter}

\section{Introduction}
Distribution-free models are specially used in the fields of
statistics and data mining for more than fifty years now, mainly for
their robustness and ability to deal with complex data structures
\citep{romeu2006operations}. However, traditional classifiers
usually make a simple assumption that the classes to be
distinguished should have a comparable number of instances
\citep{kuncheva2004combining}. This assumption doesn't often hold in
many real-world classification problems. Real-world data sets are
usually skewed, in which many of the cases belong to a larger class,
and fewer cases belong to a smaller, yet usually more interesting
class. These are also the cases where the cost of misclassifying
minority examples is much higher concerning the seriousness of the
problem we are dealing with \citep{rastgoo2016tackling}. Due to
higher weightage is given to the majority class, these systems tend
to misclassify the minority class examples as the majority, and lead
to a high false negative rate. For example, consider a binary
classification problem with the class distribution of $90:10$. In
this case, a straightforward method of guessing all instances to be
positive class would achieve an accuracy of $90\%$.

To deal with imbalanced datasets, there are many approaches
developed in the literature. One way to deal with the imbalanced
data problems is to modify the class distributions in the training
data by applying sampling techniques to the dataset. Sampling
techniques either oversamples the minority class to match the size
of the majority class \citep{guo2004learning} or undersamples the
majority class to match the size of the minority class
\citep{chen2004using}. Synthetic minority oversampling technique
(SMOTE) is among the most popular methods that oversample the
minority class by generating artificially interpolated data
\citep{chawla2002smote}. Hybrid sampling approaches, viz. SMOTE with
data cleaning methods (for example, Tomek links (TL) and edited
nearest neighbor (ENN) rule) not only balances the data but also
removes noisy instances lying on the wrong side of the decision
boundaries \citep{batista2004study}. Sometimes different
combinations of undersampling, oversampling and ensemble learning
techniques are used to tackle the curse of imbalanced datasets
\citep{lemaitre2017imbalanced}. But these approaches have apparent
deficiencies like undersampling majority instances may lose
potential useful information of the data set and oversampling
increases the size of the training data set, which may increase
computational cost. To overcome this problem, ``imbalanced
data-oriented" algorithms are designed which can handle class
imbalanced without any modification in class distribution. HDDT
\citep{cieslak2008learning} uses Hellinger distance (HD) as decision
tree splitting criterion and it is insensitive towards the skewness
of the class distribution \citep{cieslak2012hellinger}. Some other
pieces of literature are HD based random forest (HDRF)
\citep{su2015improving} and class confidence proportion decision
tree (CCPDT), robust decision tree-based algorithms which can also
handle original imbalanced datasets \citep{liu2010robust}. Though
HDDTs are robust, skew-sensitive and mitigate the need for sampling,
they are high variance estimator and a greedy algorithm.

To mitigate these problems of HDDT suffering from sticking to local
minima and overfitting the data set, an ensemble learning approach
is adopted in this paper. Important prerequisites for building a
``good" ensemble classifier is to choose the base classifier to be
simple and as accurate as possible and distinct from the other
classifier(s) used \citep{kuncheva2004combining}. Two such widely
used models for ensemble learning are decision tree (DT) and
artificial neural networks (ANN). Extensive works are done earlier
on the mapping of tree-based models to ANN in the previous
literature \citep{sethi1990comparison, sakar1993growing,
kubat1998decision, chen2006feature, chakraborty2018novel}. However,
training multilayer perceptrons (MLP) usually takes longer time and
finding the number of nodes in the hidden layer of MLP is a
challenging task whereas RBFN has the advantages of having only one
hidden layer, faster convergence, easy interpretability and
universal consistency \citep{park1991universal}. But RBFN also
assumes having equal class distribution during implementation in
classification problems. Motivated by the above discussion, we
propose in the present paper a novel superensemble classifier based
on HDDT and RBFN. Harnessing the ensemble formulation, we try to
exploit the strengths of HDDT and RBFN models to overcome their
drawbacks. The approach is first developed in theoretical details
and latter different training schemes are experimentally evaluated
on various small and medium-sized real world imbalanced data sets
having high dimensional feature spaces. The proposed superensemble
model has the advantages of significant accuracy, very less number
of tuning parameters and ability to handle small or medium-sized
datasets. Another major advantage of the proposed algorithm is its
interpretability as compared to more ``black-box-like" complex
models. Our proposed distribution-free superensemble classifier is
found to be universally consistent and an idea on parameter
optimization of the model is also proposed in this paper.

This paper is organized as follows. Section 2 illustrates the major
problems when the dataset is imbalanced in nature. Section 3
outlines the HDDT, RBFN algorithm and the details of the proposed
hybrid approach. Section 4 presents the theoretical properties of
the approach. Section 5 is devoted to computational experiments and
comparisons. Conclusions and discussions of the paper are given in
Section 6.

\section{Imbalanced Classification Problem}

Let us first investigate the effect of class imbalance on the
performance metrics and DT. It is essential to see how decision
boundaries created by DT get affected by imbalance ratio (the ratio
between the number of minority and majority examples).\\ Let $X$ be
an attribute and $Y$ be the response class. Here $Y^+$ denotes
majority class, $Y^-$ denotes minority class and $n$ is the total
number of instances. Also, let $X^\geq\longrightarrow Y^+$ and
$X^<\longrightarrow Y^-$ be two rules generated by CT. Table 1 shows
the number of instances based on the rules created using CT.
\begin{table}[H]
\small \centering \caption{An example of notions of classification
rules}
    \begin{tabular}{cccc}
        \hline
        class and attribute     & $X^\geq$ & $X^<$  & sum of instances \\ \hline
        $Y^+$                   & $a$      & $b$    & $a+b$            \\
        $Y^-$                   & $c$      & $d$    & $c+d$            \\ \hline
        sum of attributes       & $a+c$    & $b+d$  & $n$              \\
        \hline
    \end{tabular}

\end{table}
In the case of imbalanced dataset the majority class is always much
larger than the size of the minority class, thus we will always have
$a + b > > c + d$. It is clear that the generation of rules based on
confidence in CT is biased towards majority class. Various measures
like information gain (IG), gini index (GI) and misclassification
impurity (MI) are expressed as a function of confidence, used to
decide which variable to split in the important feature selection
stage \citep{flach2003geometry}. From Table 1, we can define
Confidence$(X^\geq\longrightarrow Y^+) = \frac{a}{a+c}$. Let us
consider a binary classification problem with the label set $\Omega$
= \{$\omega_{1},\omega_{2}$\} and let $P(j /t)$ be the probability
for class $\omega_{j}$ at a certain node $t$ of the classification
tree, where, $j = 1, 2$ for binary classification problems. These
probabilities can be estimated as the proportion of points from the
respective class within the data set that reached the node $t$.
Using Table 1, we compute the following:
\begin{eqnarray}
P(Y^{+}/X^\geq)= \frac{a}{a+c} =
\mbox{Confidence}(X^\geq\longrightarrow Y^+)
\end{eqnarray}
For an imbalanced dataset, $Y^+$ will occur more frequently with
$X^\geq$ \& $X^<$ than to $Y^-$. So the concept of confidence is a
fatal error in an imbalanced classification problem where minority
class is of more interest and data is biased towards the majority
class. In binary classification, information gain for splitting a
node $t$ is defined as:
\begin{eqnarray}
\mbox{IG}=
\mbox{Entropy}(t)-\sum_{i=1,2}\frac{n_{i}}{n}\mbox{Entropy}(i)
\end{eqnarray}
where $i$ represents one of the sub-nodes after splitting (assuming
we have two sub-nodes only), $n_{i}$ is the number of instances in
sub-node $i$ and $n$ is the total number of instances. Entropy at
node $t$ is defined as:
\begin{eqnarray}
\mbox{Entropy}(t)= -\sum_{j=1,2}P(j /t)log\big(P(j /t)\big)
\end{eqnarray}
The objective of classification using CT is to maximize IG which
reduces to (assuming the training set is fixed and so the first term
in equation (2) is fixed as well):
\begin{eqnarray}
\mbox{Maximize}\bigg\{-\sum_{i=1,2}\frac{n_{i}}{n}\mbox{Entropy}(i)\bigg\}
\end{eqnarray}
Using Table 1 and equation (3); the maximization problem in equation
(4) reduces to:
\begin{multline}
\mbox{Maximize}\bigg\{\frac{n_{1}}{n}\Big[P(Y^{+}/X^\geq)log\big(P(Y^{+}/X^\geq)\big)+P(Y^{-}/X^\geq)log\big(P(Y^{-}/X^\geq)\big)]\\
+\frac{n_{2}}{n}[P(Y^{+}/X^<)log\big(P(Y^{+}/X^<)\big)+P(Y^{-}/X^<)log\big(P(Y^{-}/X^<)\big)\Big]\bigg\}
\end{multline}
The task of selecting the ``best" set of features for node $i$ are
carried out by picking up the feature with maximum IG. As
$P(Y^{+}/X^\geq)>>P(Y^{-}/X^\geq)$, we face a problem while
maximizing (5). We can conclude from the above discussion that
feature selection in CT based on the impurity measures is biased
towards majority class.\\

This imbalanced problem can be looked upon from the perspective of
performance evaluation metrics as well. Standard notations of a
confusion matrix are given in Table 2. From Table 2 and using
equation (1), we can write
$P(Y^{+}/X^\geq)=\frac{\mbox{TP}}{\mbox{TP+FP}}>P(Y^{-}/X^\geq)=\frac{\mbox{FP}}{\mbox{TP+FP}}$,
etc. Since the misclassification rate in the minority class is
higher than the misclassification rate in the majority class, the
confusion matrix based on different classification algorithms will
have a fatal error. As a consequence, prediction models with
imbalanced data will lead to a high false negative rate.

\begin{table}[H]
\small \centering \caption{Confusion matrix for binary
classification problem}
    \begin{tabular}{ccc}
        \hline
        All instances           & predicted majority class & predicted minority class \\
        \hline
        actual majority class   & True Positive (TP)       & False Negative(FN)       \\
        actual minority class   & False Positive (FP)      & True Negative (TN)       \\
        \hline
    \end{tabular}

\end{table}

\section{Solution Methodology}  \label{Proposed Methodology for Handling Imbalanced Dataset}

\subsection{\textbf{An Overview on HDDT}\\}

One way of handling an imbalanced dataset is to take recourse to
sampling techniques during the preparation of the data set for
further analysis. However, a significant disadvantage of these
techniques is that in the process of sampling we either lose a lot
of information in the form of losing the real-life data. HDDT uses
HD as the splitting criterion to build a decision tree, has the
property of skew insensitivity \citep{cieslak2008learning}. For the
application of HD as a decision tree criterion, the final
formulation can be written as follows:
\begin{equation}
HD =
d_{H}(X_{+},X_{-})=\sqrt{\sum_{j=1}^{k}\bigg(\frac{|X_{+j}|}{|X_{+}|}-\frac{|X_{-j}|}{|X_{-}|}\bigg)^{2}},
\end{equation}
where $|X_{+}|$ indicates the number of examples that belong to the
majority class in training set and $|X_{+j}|$ is the subset of
training set with the majority class and the value $j$ for the
feature $X$. Similar explanation can be written for $|X_{-}|$ and
$|X_{-j}|$ but for the minority class. Here $k$ is the number of
partitions of the feature space $X$. The bigger the value of HD, the
better is the discrimination between the features. A feature is
selected that carries the minimal affinity between the classes.
Since equation (6) is not influenced by prior probability, it is
insensitive to the class distribution. Based on the experimental
results, Chawla \citep{cieslak2008learning} concluded that unpruned
HDDT is recommended for dealing with imbalanced problems as a better
alternative to sampling approaches.


\subsection{\textbf{An Overview on RBFN}\\}

RBFN is a family of ANNs, consists of only a single hidden layer and
uses a nonlinear function called radial basis function as an
activation function, unlike MLP. Figure 1 gives an example of RBFN
with one hidden layer. RBFN is presented as a three-layered
feed-forward structure where input layer distributes inputs to the
hidden layer which contains neurons with nonlinear activation
function \citep{gyorfi2006distribution}. Since gaussian functions
are most frequently used in this layer, we use the gaussian kernel
in this paper:

\[
\phi_i(x_i)=\phi \big(\parallel x_i - c_{i} \parallel ; \sigma_{i}
\big)= exp\bigg(- \frac{{\parallel x_i - c_{i} \parallel}^{2}}{2
\sigma_{i}^2} \bigg)
\]
where $x$ is an input vector, $\phi_i$ is the output of $i^{th}$
hidden neuron in the hidden layer with centers $c_i$ and $\sigma_i$
as the standard deviation. Finally, the output layer can be written
as a weighted sum of hidden layer outputs:

\[
f(x_i)=\sum_{j=1}^{k} w_{j} \; \phi \big(\parallel x_i - c_{j}
\parallel \big)+ w_{0},
\]
where $w_{j}$ is the weight of the link from $j^{th}$ hidden neuron
to the $i^{th}$ output neuron. An interesting property of RBFN which
distinguishes it from other types of ANNs is that here the center
vector can be selected as cluster centers of the input data. For
practical use, the number of clusters is usually chosen to be much
smaller than the number of data points resulting in RBFN of less
complexity than other types of ANNs.

\begin{figure}[t]
\centering
\includegraphics[scale=0.40]{}
\caption{An example of Radial basis network with one hidden layer.}
\end{figure}

\subsection{\textbf{Proposed Superensemble Classifier}\\}

The motivation behind designing a superensemble classifier for
imbalanced data sets is: (1) one would like to work with the
original dataset without taking recourse to sampling; (2) we would
like to ignore the drawbacks of single classifiers (HDDT and RBFN)
and harnessing their positiveness; (3) high prediction accuracy.
Here we are going to discuss our proposed approach which utilizes
the power of HDDT as well as the superiority of RBF networks.

In the proposed model, we first split the feature space into areas
by HDDT algorithm (discussed in Section 3.1). Based on feature
rankings provided by HDDT, a set of important features are chosen
and extracted from the training dataset. We then build the RBFN
model using the important variables obtained through HDDT algorithm
along with the prediction results obtained from HDDT as another
input information in the input layer of the network. The
effectiveness of the proposed classifier lies in the selection of
important features and use of prediction results of HDDT followed by
the application of the RBFN model. The inclusion of HDDT output as
an additional input feature not only improves the model accuracy but
also increases class separability. The proposed superensemble
classifier can handle imbalance through the implementation of HDDT
in selecting features as well as the incorporation of its predicted
classes tied up with one hidden layered RBFN model. This
hybridization improves the performances of single classifiers as
well as reduces the biases and variances of HDDT and RBFN. The
informal workflow of our proposed model is as follows:

\begin{itemize}
\item Apply HDDT algorithm to rank the feature, extracts
important features and find the splits between different adjacent
values of the feature using HD values (equation (6)).
\item Choose the features that have maximum HD value. We further grow
unpruned HDDT and record its outputs. HDDT has an in-built important
feature selection mechanism and it takes into account the imbalanced
nature of the dataset.
\item Export important input variables along with additional feature
(prediction result of HDDT algorithm) to the RBFN model and a neural
network is generated.
\item RBFN model uses a gaussian kernel as an activation function, and
parameter optimization is done using a gradient descent algorithm
(to be discussed in Section 4.2). Finally, we record the final
predicted class levels.
\end{itemize}

\begin{figure}[t]
\centering
\includegraphics[scale=0.48]{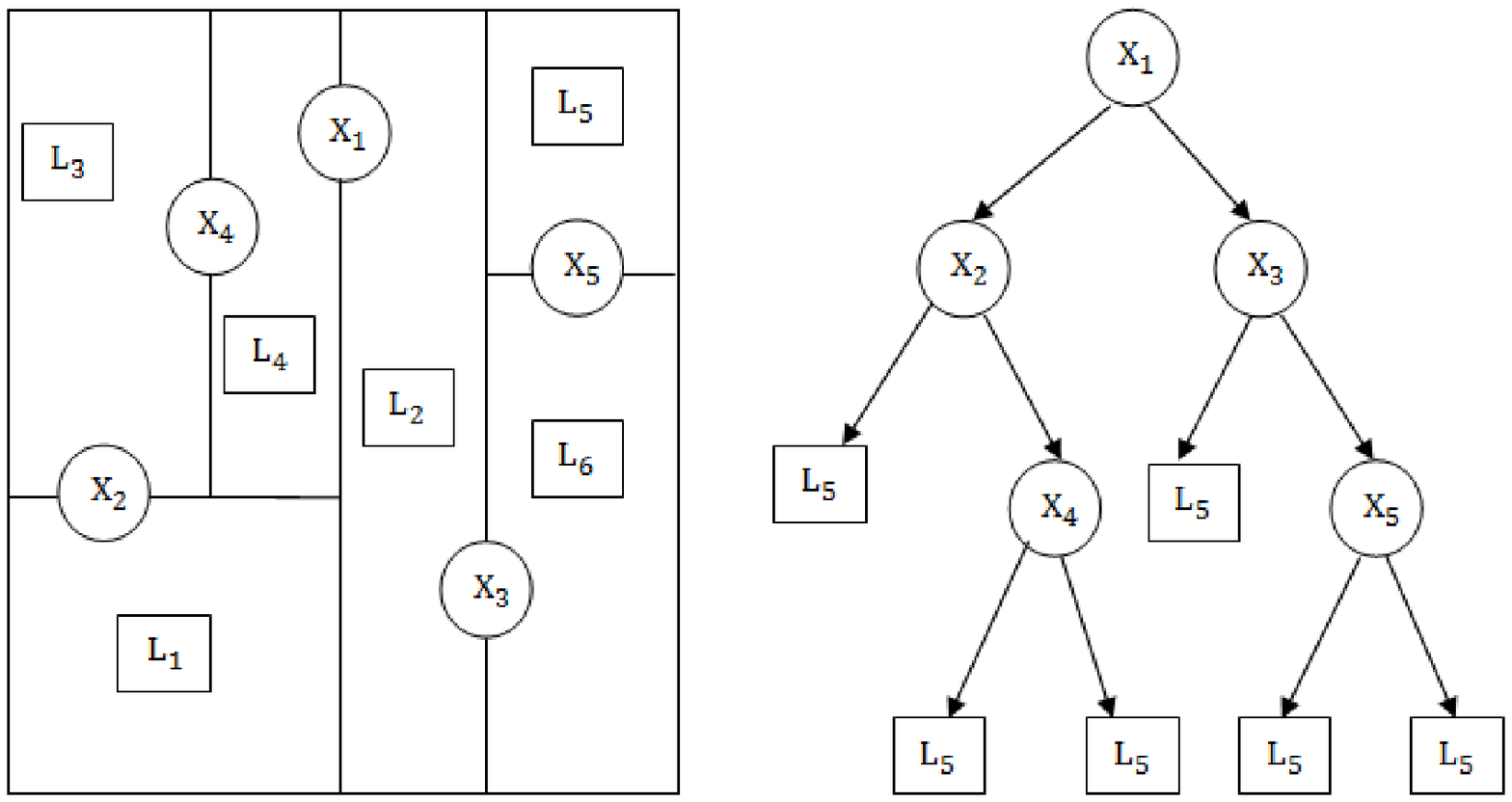}
\includegraphics[scale=0.45]{}
\caption{An example of superensemble classifier with $X_{i};
i=1,2,3,4,5$ as Important features obtained by HDDT, $L_{i}$ as leaf
nodes and OP as HDDT output. Creation of HDDT (Left) and one hiddn
layered RBFN model (Right).} \label{figmodRWNGT}
\end{figure}

This algorithm is a multi-step problem-solving approach such as
handling imbalanced class distribution, selecting important features
and finally getting an improved prediction. Due to the fact that
HDDT is found to be robust in handling the curse of imbalanced
datasets, thus incorporation of its predicted class levels along
with important features obtained by HDDT as input features in RBFN
will necessarily improve the performance of the single classifiers.
A flowchart of the superensemble classifier is presented in Figure
2.

\section{\textbf{Theoretical Properties of the Proposed Model}}

Our proposed superensemble classifier has the following
architecture: (1) it extracts important features from the feature
space using HDDT algorithm; (2) it builds one hidden layered RBFN
model with the important features extracted using HDDT along with
HDDT outputs as an additional feature. We investigate the
theoretical properties of the proposed model by introducing a set of
regularity conditions for consistency of HDDT followed by the
consistency results of RBFN algorithm. Further, an idea about
parameter optimization in the superensemble model is proposed in
Section 4.2.

\subsection{\textbf{Regularity Conditions for Universal
Consistency}\\}

Let $\underline{X}$ be the space of all possible values of $d$
features and $C$ be the set of all possible binary class labels. We
are given a training sample with $n$ observations
$L=\{(X_{1},C_{1}), (X_{2},C_{2}),\\...,(X_{n},C_{n})\}$, where
$X_{i}=(X_{i1},X_{i2},...,X_{id}) \in \underline{X}$ and $C_{i} \in
C$. Also let $\Omega=\{\omega_{1},\omega_{2},...,\omega_{k_n}\}$ be
a partition of the feature space $\underline{X}$. We denote
$\widetilde{\Omega}$ as one such partition of $\Omega$. Define
$L_{\omega_{i}}=\{(X_{i},C_{i})\in L: X_{i}\in \omega_{i}, C_{i}\in
C\}$ as the subset of $L$ induced by $\omega_{i}$ and let
$L_{\widetilde{\Omega}}$ denote the partition of $L$ induced by
$\widetilde{\Omega}$. The value of HD obtained using Eqn. (6) is
used to partition $\underline{X}$ into a set $\widetilde{\Omega}$ of
nodes and then we can construct a classification function on
$\widetilde{\Omega}$. There exists a partitioning classification
function $p:\widetilde{\Omega} \rightarrow C$ such that $p$ is
constant on every node of $\widetilde{\Omega}$. Now, let us define
$\widehat{L}$ to be the space of all learning samples and
$\mathbb{D}$ be the space of all partitioning classification
function, then $\Phi:\widehat{L} \rightarrow \mathbb{D}$ such that
$\Phi(L)=(\psi \circ \phi)(L)$, where $\phi$ maps $L$ to some
induced partition $(L)_{\widetilde{\Omega}}$ and $\psi$ is an
assigning rule which maps $(L)_{\widetilde{\Omega}}$ to $p$ on the
partition $\widetilde{\Omega}$. The most basic reasonable assigning
rule $\psi$ is the plurality rule
$\psi_{pl}(L_{\widetilde{\Omega}})=p$ such that if $x \in
\omega_{i}$, then
\[
p(\underline{x})=\arg \max_{c \in C}|L_{c,\omega_{i}}|.
\]
The plurality rule is used to classify each new point in
$\omega_{i}$ as belonging to the class (either 0 or 1 in this case)
most common in $L_{\omega_{i}}$. This rule is very important in
proving risk consistency of the HDDT algorithm. Binary split
function partitions a parent node $\omega_{i} \subseteq
\underline{X}$ into a non-empty child nodes $\omega_{1}$ and
$\omega_{2}$, called left child and right child node respectively.
We use HD measure (see in Section 3.1) as goodness of split
criterion and for any parent node $\omega_{i}$, the goodness of
split criterion ranks the split function and build the tree. The
stopping rule in HDDT is chosen as the minimum number of split in
the posterior sample, called minsplit function ($r(\omega_{i})$). If
$r(\omega_{i})$ $\geq \alpha$, then $\omega_{i}$ will split into two
child nodes and if $r(\omega_{i})$ $< \alpha$, then $\omega_{i}$ is
a leaf node and no more split is required. Here $\alpha$ is
determined by the user, though for practice it is usually taken as
10\% of the training sample size. \\

Thus HDDT is a binary tree-based classification and partitioning
scheme $\Phi$, defined as an assignment rule applied to the limit of
a sequence of induced partitions $\phi^{(i)}(L)$, where
$\phi^{(i)}(L)$ is the partition of the training sample $L$ induced
by the partition $(\phi_{i} \circ \phi_{i-1} \circ .... \circ
\phi_{1})(\underline{X})$. For every node $\omega_{i}$ in a
partition $\widetilde{\Omega}$ such that $r(\omega_{i})$ $\geq
\alpha$, then the function $\phi(\widetilde{\Omega})$ splits each
node into two child nodes using the binary split in the question set
as determined by $\mathscr{G}$. For other nodes $\omega_{i} \in
\widetilde{\Omega}$ such that $r(\omega_{i})$ $< \alpha$, then
$\phi(\widetilde{\Omega})$ leaves $\omega_{i}$ unchanged. For a
partition $\widetilde{\Omega}$ of $X$, let $\widetilde{\Omega}[x \in
X]=\{ \omega \in \widetilde{\Omega}: x \in \omega \}$ be the node
$\omega$ in $\widetilde{\Omega}$ which contains $x$.\\
Mathematically, we can write
\begin{eqnarray*}
\Phi(L)=(\psi \circ \lim_{i\rightarrow \infty}\phi^{(i)})(L) \quad
\mbox{\big(where, $\phi^{(i)}(L)=L_{(\phi_{i} \circ \phi_{i-1} \circ
.... \circ \phi_{1})(\underline{X})}$\big)}.\\
\end{eqnarray*}

The conditions for risk consistency of HDDT are given in Theorem 1.

\begin{theorem}
Suppose $(\underline{X},\underline{Y})$ be a random vector in
$\mathbb{R}^{d}\times\ C$ and $L$ be the training set consisting of
$n$ outcomes of $(\underline{X},\underline{Y})$. Let $\Phi$ be a
HDDT scheme such that
\[
\Phi(L)=(\psi_{pl}\circ \lim_{i\to\infty}\phi^{(i)})(L) \quad \quad
\mbox{(where, $\psi_{pl}$ is the plurality rule).}
\]
Also suppose that all the split function in HDDT in the question set
associated with $\Phi$ are axis-parallel splits and satisfies
shrinking cell condition. If for every n and $w_{i} \in
\tilde{\Omega}_{n}$, the induced subset $L_{w_{i}}$ has cardinality
$\geq k_n$, where $\frac{k_n}{log(n))}\rightarrow \infty$, then
$\Phi$
is risk consistent.\\
\end{theorem}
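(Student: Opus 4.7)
The plan is to reduce the claim to the classical consistency template for data-dependent histogram classifiers. Because $\Phi(L)$ assigns to each cell $\omega$ of the limiting partition $\widetilde{\Omega}_n$ its plurality label, it is the plug-in rule associated with the cell-wise histogram regression estimator $\hat{\eta}_n(x)=|L_\omega|^{-1}\sum_{(X_i,Y_i)\in L_\omega}\mathbb{I}\{Y_i=1\}$ where $\omega=\widetilde{\Omega}_n[x]$. Using the standard inequality $L_n-L^*\le 2\,E\bigl|\hat{\eta}_n(X)-\eta(X)\bigr|$, with $\eta(x)=P(Y=1\mid X=x)$ and $L^*$ the Bayes risk, risk consistency reduces to showing $E|\hat{\eta}_n(X)-\eta(X)|\to 0$. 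A triangle-inequality decomposition $\hat{\eta}_n-\eta=(\hat{\eta}_n-\bar{\eta}_n)+(\bar{\eta}_n-\eta)$, with $\bar{\eta}_n(x)=E\bigl[\eta(X)\mid X\in\widetilde{\Omega}_n[x]\bigr]$, then splits the task into a bias piece and a variance piece.

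For the bias $\bar{\eta}_n-\eta$ I would invoke the shrinking cell hypothesis on the axis-parallel splits: $\mbox{diam}\bigl(\widetilde{\Omega}_n[X]\bigr)\to 0$ almost surely. Since $\eta$ is a bounded measurable function on $\mathbb{R}^d$, the Lebesgue differentiation theorem gives $\bar{\eta}_n(X)\to \eta(X)$ almost surely, and dominated convergence (using $0\le \eta\le 1$) yields $E|\bar{\eta}_n(X)-\eta(X)|\to 0$. This step uses only the geometric assumption on the cells and is independent of how the Hellinger criterion selects the splits.

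For the variance $\hat{\eta}_n-\bar{\eta}_n$, conditional on a fixed partition, Hoeffding's inequality applied inside each cell with at least $k_n$ points gives the pointwise bound $P\bigl(|\hat{\eta}_n-\bar{\eta}_n|>\epsilon\mid\widetilde{\Omega}_n\bigr)\le 2\exp(-2k_n\epsilon^2)$. Because $\widetilde{\Omega}_n$ is itself data-dependent, I would next apply a uniform deviation bound over the combinatorial class $\mathcal{P}_n$ of all axis-parallel binary tree partitions reachable by HDDT on $n$ points. Since the splits are axis-parallel, the partition/shattering number of $\mathcal{P}_n$ is polynomial in $n$ and $d$, so $\log|\mathcal{P}_n|=O(d\log n)$. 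Combining Hoeffding with a union bound gives a failure probability of order $\exp(-2k_n\epsilon^2+C\,d\log n)$, which vanishes precisely because $k_n/\log(n)\to\infty$; integrating over $X$ and letting $\epsilon\downarrow 0$ closes the variance estimate, and together with the bias estimate the proof is complete.

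The hard part is the last step: because HDDT picks splits greedily to maximize the Hellinger criterion, the partition is genuinely data-dependent and one cannot simply condition on a fixed partition. The key observation is that the Hellinger rule governs only \emph{which} member of $\mathcal{P}_n$ is chosen, not the combinatorial size of $\mathcal{P}_n$, so the VC-type control of axis-parallel tree families is not destroyed. I would also have to verify that the minsplit rule based on $\alpha$ keeps every terminal cell populated by at least $k_n$ points (by tying $\alpha$ to $k_n$), so that the cell-wise Hoeffding bound actually applies to every $\omega\in\widetilde{\Omega}_n$. Once these combinatorial and stopping-rule points are in place, the rate $k_n/\log(n)\to\infty$ is exactly what is needed to dominate the union-bound penalty.
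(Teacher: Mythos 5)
Your proposal follows essentially the same route as the paper: the paper's ``proof'' is only a pointer to the consistency results for data-dependent histogram/partitioning classifiers (Lugosi--Nobel and the classification-tree analogue), whose argument is exactly your plug-in reduction plus bias--variance decomposition, with the shrinking-cell condition handling the bias term and a per-cell Hoeffding bound combined with a combinatorial union bound over axis-parallel tree partitions handling the variance term, the hypothesis $k_n/\log(n)\rightarrow\infty$ being precisely what absorbs the logarithmic union-bound penalty. The one step to tighten is your claim that $\log|\mathcal{P}_n|=O(d\log n)$: for tree partitions with a growing number $m_n$ of leaves the number of induced partitions of $n$ points grows like $(dn)^{m_n}$, which is why the cited theorems state the condition as each cell belonging to a fixed Vapnik--Chervonenkis class (axis-parallel boxes in $\mathbb{R}^{d}$) together with the $k_n$ per-cell lower bound rather than a polynomial bound on the whole partition family; this is a repair of the bookkeeping, not of the idea.
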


\begin{proof}
The proof of the theorem is very similar to the proof of the
consistency of the classification tree
\citep{chakraborty2018nonparametric} and histogram based
partitioning and classification schemes
\citep{lugosi1996consistency}. The conditions for consistency
require each cell of every partition belongs to a fixed
Vapnik-Chervonenkis class of sets and also every cell must contain
$k_n$ points satisfying $\frac{k_n}{log(n))}\rightarrow \infty$ as
sample size approaches to infinity. For detailed proof, the reader
may refer to
\citep{chakraborty2018nonparametric, lugosi1996consistency}.\\
\end{proof}

\begin{remark}
It should be noted that the choice of important features based on
HDDT is a greedy algorithm and the optimality of local choices of
the best feature for a node doesn't guarantee that the constructed
tree will be globally optimal \citep{kuncheva2004combining}.
\end{remark}

Further, we build the RBFN model with HDDT extracted features and
$OP$ as another input feature in the RBFN model. The dimension of
the input layer in the RBFN model, denoted by $d_{m}(\leq{d})$, is
the number of important features obtained by HDDT + 1. RBFN model
consists of strictly one hidden layer, and due to this, the
superensemble model is easily interpretable and fast in
implementation. Our next objective is to discuss the sufficient
condition for universal consistency of RBFN model. After
incorporating HDDT output in the feature space, we have $n$ training
sequence $\xi_{n}=\{(Z_{1},Y_{1}),...,(Z_{n},Y_{n})\}$ of $n$ i.i.d
copies of $(\underline{Z},\underline{Y})$ taking values from
$\mathbb{R}^{d_{m}}\times C$, a classification rule realized by a
one-hidden layered neural network is chosen to minimize the
empirical $L_{1}$ risk. Define the $L_{1}$ error of a function $\psi
: \mathbb{R}^{d_{m}}\rightarrow \{0,1\}$ by $J(\psi)=E\{ |\psi(Z)-Y|
\}$.\\\\ Consider a RBF network with one hidden layer and at most
$k$ nodes for a fixed gaussian function $\phi$ (as defined in
Section 3.2), is given by the equation:

\[
f(z_i)=\sum_{j=1}^{k} w_{j} \; \phi \big(\parallel z_i - c_{j}
\parallel \big)+ w_{0},
\]
where $w_0, w_1,...,w_k \in [-b,b] \; (b>0)$ and $c_1, c_2,...,c_k
\in \mathbb{R}^{d_{m}}$. The weights $w_j$ and $c_j$ are parameters
of the RBF network and $\phi$ is the radial basis function. Choosing
gaussian basis function as radial basis function which is a
decreasing function such that $\phi(z) \rightarrow 0 \; as \; x
\rightarrow \infty$. The next Theorem due to krzyzak et al.
\citep{krzyzak1996nonparametric} gives regularity conditions for the
universal consistency of RBFN model. But for the sake of
completeness, we are rewriting Theorem 2 of
\citep{krzyzak1996nonparametric} in our context.

\begin{theorem}
If $k \rightarrow \infty, \; k=o\big(\frac{n}{logn}\big) \;
\mbox{as} \; n \rightarrow \infty $ in the RBFN model having
gaussian radial basis function $\phi$, and in which all the
parameters are chosen by empirical risk minimization, then RBFN
model is said to be universally consistent
for all distribution of $(\underline{Z}, \underline{Y})$.\\
\end{theorem}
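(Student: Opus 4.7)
The plan is to follow the empirical risk minimization argument of Krzyzak et al., transported to the feature space $\mathbb{R}^{d_m}$ produced by the HDDT preprocessing step. Let $\mathcal{F}_{n,k}$ denote the class of gaussian RBF networks with at most $k$ hidden units, centres $c_j \in \mathbb{R}^{d_m}$, and output weights bounded by some $b_n$, truncated to $[0,1]$ for classification purposes. Let $\psi_n \in \mathcal{F}_{n,k}$ be the empirical $L_{1}$ risk minimizer and let $J^{*}$ denote the Bayes $L_{1}$ risk. The standard two-term decomposition
\begin{equation*}
J(\psi_n) - J^{*} \;\leq\; 2\sup_{f \in \mathcal{F}_{n,k}} \bigl|J_n(f) - J(f)\bigr| \;+\; \Bigl( \inf_{f \in \mathcal{F}_{n,k}} J(f) - J^{*} \Bigr)
\end{equation*}
splits the excess risk into an estimation term and an approximation term, and I would prove that each of them converges to zero almost surely.

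For the estimation term, I would invoke a uniform deviation inequality of Vapnik-Chervonenkis/pseudo-dimension type. The pseudo-dimension of $\mathcal{F}_{n,k}$ is polynomial in $k$ (of order $O(k\,d_m)$ for a fixed gaussian shape parameter), so classical Pollard-style covering number bounds give a uniform deviation of order $\sqrt{k \log n / n}$. The hypothesis $k = o(n/\log n)$ immediately forces this quantity to zero; an exponential concentration inequality together with Borel-Cantelli then upgrades convergence in probability to almost-sure convergence.

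For the approximation term, I would rely on the universal approximation property of gaussian RBF networks, namely the Park-Sandberg density result already cited in Section 3.2: the linear span of translates of a single gaussian kernel is dense in $L_{p}(\mu)$ for any probability measure $\mu$ and any $1 \leq p < \infty$. Combined with a truncation argument on the output range and on the output weights, this yields $\inf_{f \in \mathcal{F}_{n,k}} J(f) \downarrow J^{*}$ as $k \to \infty$ and $b_n \to \infty$. Combining the two bounds gives $J(\psi_n) \to J^{*}$ almost surely, which is exactly universal consistency since the Bayes rule minimises $J$ across all distributions of $(\underline{Z},\underline{Y})$.

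The main obstacle will be reconciling the simultaneous growth of the network size $k$ and the weight bound $b_n$: the approximation step requires $b_n$ to be large enough to realise accurate approximants, while the estimation step requires $b_n$ small enough that the covering number inequality remains effective. The rate condition $k = o(n/\log n)$ is designed exactly to permit this balancing; once $b_n$ is chosen to grow sufficiently slowly (for example, $b_n = o(\log n)$), both terms vanish and the conclusion follows. The technical backbone of this balancing act is carried out in \citep{krzyzak1996nonparametric}, and in the present context the only additional ingredient is that the effective input dimension has been reduced from $d$ to $d_m$ by the HDDT feature-selection stage, which leaves every step of the argument intact.
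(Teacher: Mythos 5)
Your outline is correct and is essentially the argument the paper relies on: the paper gives no proof of its own but simply imports this result as Theorem~2 of \citep{krzyzak1996nonparametric}, whose proof proceeds exactly via your decomposition into an estimation term (controlled by Pollard-type covering-number bounds under $k=o(n/\log n)$) and an approximation term (controlled by the Park--Sandberg density of gaussian RBF networks), with the slow growth of the weight bound balancing the two. The only caveat worth flagging is one the paper itself glosses over: the augmented inputs $Z_i$ include the data-dependent HDDT output, so treating $\xi_n$ as i.i.d.\ copies of a fixed $(\underline{Z},\underline{Y})$ is an idealization rather than something your (or the paper's) argument actually justifies.
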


\begin{remark}
We can conclude that if the superensemble classifier satisfies the
regularity conditions as stated in Theorem 1 and Theorem 2, then the
algorithm will be universally consistent. This is a fundamental
property of any classifier for its robustness and general use. But
computationally choosing parameters of RBFN by minimizing empirical
$L_1$ risk will be very costly. In practice, parameters of the RBFN
model are learned by a gradient descent algorithm, discussed in the
next subsection.
\end{remark}

\subsection{\textbf{Optimization of Model Parameters}\\}

Here we will discuss the tuning parameters of the proposed
superensemble classifier. In the first stage of the pipeline model,
`minsplit' function (see Section 4.1) to be chosen as $10\%$ of the
training dataset is recommended as the stopping rule in HDDT
algorithm. Further, we grow unpruned HDDT and use HDDT suggested
features and HDDT output as an additional feature in the input
feature space of RBFN. RBF network is designed using a linear
combination of gaussian basis functions. Therefore we need to use
some optimization algorithm for empirical error (to be denoted by E
in the rest of the paper) minimization on $\xi_n$
\citep{poggio1990regularization}. Three important parameters to be
optimized while training RBF network are: centers $(c_i)$, standard
deviation $(\sigma_i)$ and weights $(w_j)$ of each neuron. We use a
gradient descent algorithm over E to perform the optimization task
in RBFN model \citep{karayiannis1999reformulated}, as follows:

\[
\Delta c_i = - \rho_c \nabla_{c_{i}}E,
\]

\[
\Delta \sigma_i = - \rho_\sigma \frac{\partial E}{\partial
\sigma_i},
\]

\[
\Delta w_j = - \rho_w \frac{\partial E}{\partial w_j},
\]
where $\rho_c, \rho_\sigma \; \mbox{and} \; \rho_w$ are small
positive constants. By this way, the parameters of the gaussian
basis function will be optimized. Generalization error is estimated
by cross-validation method, and the optimum value of $k$ can be
found by trial and error.

\section{Computational Experiments}  \label{experimental_eval}

In this section, we describe the datasets in brief and also discuss
performance evaluation metric. Subsequently, we are going to report
the experimental results and compare our proposed model with other
state-of-the-art classifiers.

\subsection{\textbf{Data Description}}

The proposed model is evaluated using five publicly available
datasets from a wide variety of application areas such as
management, business, and medicine, available at UCI Machine
Learning repository \citep{asuncion2007uci} and a previous study
\citep{chakraborty2018novel}. Breast cancer dataset consists of 9
discrete features whereas pima diabetes dataset has 8 continuous
features in its feature space. German credit card dataset (also
popularly know as Statlog dataset) consists of 13 qualitative
features and 7 numerical features. In this dataset, entries
represent persons who take credit by a bank, and each person is
classified as good or bad credit risks according to the set of
attributes. Page blocks database has numeric attributes, contain
blocks of the page layout of a document that has been detected by a
segmentation process. Indian business school dataset contains 10
continuous and 7 categorical variable on the characteristics of
students admitting in a business school and the response variable
denotes whether the student will be placed or not at the end of the
curriculum. To measure the level of imbalance of these datasets, we
compute the coefficient of variation (CV) which is the proportion of
the deviation in the observed number of samples for each class
versus the expected number of examples in each class
\citep{wu2010cog}. We have chosen thee datasets with a CV more than
equal to $0.30-$ a class ratio of $2:1$ on a binary dataset as
imbalanced data. Table 3 gives an overview of these data sets.

\begin{table}[H]
\footnotesize \centering \caption{Characteristics of the data sets
used in experimental evaluation}
    \begin{tabular}{ccccccc}
        \hline
        Dataset                  & Classes    & Objects & Number of      & Number of          & Number of          & CV    \\
                                 &            & $(n)$   & feature $(p)$  & $(+)$ve instances  & $(-)$ve instances  &       \\ \hline
        breast cancer            & 2          & 286     & 9              & 201                & 85                 & 0.41  \\
        german credit card       & 2          & 1000    & 20             & 700                & 300                & 0.40  \\
        indian business school   & 2          & 480     & 17             & 400                & 80                 & 0.56  \\
        page blocks              & 2          & 5473    & 10             & 4913               & 560                & 0.80  \\
        pima diabetes            & 2          & 768     & 8              & 500                & 268                & 0.30  \\
        \hline
    \end{tabular}

\end{table}

\subsection{\textbf{Performance Metrics}}

The performance evaluation measure used in our experimental analysis
is based on the confusion matrix in Table 2. Area under the receiver
operating characteristic curve (AUC) is a popular metric for
evaluating performances of imbalanced datasets and higher the value
of AUC, the better the
classifier is.\\\\
AUC = $\frac{\mbox{Sensitivity}+\mbox{Specificity}}{2}$; where,
Sensitivity = $\frac{TP}{TP+FN}$; Specificity = $\frac{TN}{FP+TN}$.

\subsection{\textbf{Results and Comparisons}}

In order to show the impact of the proposed superensemble
classifier, it is applied to the high-dimensional small or
medium-sized datasets from various applied areas. These are such
types of datasets in which not only classification is the task but
also feature selection plays a vital role as well. To start, we
first shuffled the observations in each of the five different
datasets randomly and split them into training, validation and test
datasets in a ratio of $50 : 25 : 25$. We have repeated each of the
experiments 5 times with different randomly assigned training,
validation and test sets. Further, we compare our proposed
classifier mostly with ``imbalanced data-oriented" classifiers as
baseline comparisons. Even we apply different sampling approaches
over traditional classifiers and evaluate AUC values to see the
competitiveness of the proposed model. All the classifiers are
implemented in the R statistical package, and sampling techniques
are applied in python toolbox on a PC with 8GB memory.

\subsubsection{Baseline Comparisons\\}

We start the experimental analysis by implementing CT algorithm to
five publicly available imbalanced datasets. Tree-based CT model is
trained using \textit{rpart} package implementation in R. CT uses
gini index, and $C_p$ has been used for selection of variables to
enter and leave the tree structure. Further, an ensemble of trees,
random forests (RF), was implemented using \textit{randomForest}
package in R. We reports their prediction performances in Table 4.
Another  simple nonparametric algorithm, k-nearest neighbor (RF) is
applied to the datasets using \textit{class} implementation in R. To
implement neural nets, we first standardize the datasets and run the
ANN model. And we used ``logsig" transfer function to bring back the
original form at the end of modeling. We implemented the ANN model
with different combinations of hidden layers without employing any
other feature selection algorithm using \textit{neuralnet} package.
Since the datasets are small or medium sample sized, thus going
beyond 2 hidden layered (2HL) neural net will overfit the datasets.
For one hidden layer (1HL) ANN model, the number of hidden neurons
are chosen as 2/3 the size of the input layer, plus the size of the
output layer. But for 2HL ANN model number of neurons in 1st HL are
chosen as 2/3 the size of the input layer and the number of neurons
in 2nd HL are chosen as 1/3 the size of the input layer. RBFN is a
particular class of ANN which uses radial basis kernel for nonlinear
classification. Using \textit{RSNNS} package, we applied the RBFN
model with gaussian kernel function, and the maximum number of
iterations in all these NN implementations are chosen as 100.
Execution time for RBFN model is lesser than ANN but higher than
tree-based models.

We now implemented HDDT algorithm by using R Package `CORElearn' for
learning from imbalanced datasets. HDDT usually achieved higher
accuracy than CT and RF. This indicates that ``imbalanced
data-oriented" classifiers perform better than the conventional
supervised classifiers designed for general purposes. Further, we
implemented HDRF, CCPDT which are among other imbalanced
data-oriented algorithms. Finally, we applied our proposed
superensemble classifier which is a pipeline model. In the first
stage, we select important features using HDDT and record its
classification outputs. In the next step, we design a neural network
with the above mentioned important features along with HDDT output
as an additional feature vector. While RBFN implementation, we used
gaussian kernel and used gradient descent algorithm for parameter
optimization (see details in Section 4.2). HDDT reduces the
dimension of the feature set and passes HDDT outputs (taking into
account the curse of dimensionality) to the RBFN model, and RBFN
improves the predictions. We reported the performance of various
classifiers in terms of AUC value in Table 4. It is clear from Table
4 that our proposed methodology achieved better performance than
other state-of-the-art for most of the datasets used in this study
other than page blocks data.

\begin{table}[H]
\footnotesize \centering \caption{AUC results (and their standard
deviation) of classification algorithms over original imbalanced
test datasets}
    \begin{tabular}{cccccc}
        \hline
        Classifiers                         & breast                 & German credit         & Indian business   & page               & pima            \\
                                            & cancer                 & card                  & school            & blocks             & diabetes        \\
                                            \hline
        CT                                  & 0.603 (0.04)           & 0.665 (0.03)          & 0.810 (0.04)      & 0.950 (0.00)       & 0.724 (0.02)   \\
        RF                                  & 0.690 (0.06)           & 0.725 (0.03)          & 0.850 (0.04)      & 0.964 (0.00)       & 0.747 (0.04)   \\
        k-NN                                & 0.651 (0.03)           & 0.727 (0.01)          & 0.750 (0.03)      & 0.902 (0.02)       & 0.730 (0.05)   \\
        RBFN                                & 0.652 (0.06)           & 0.723 (0.04)          & 0.884 (0.05)      & 0.935 (0.01)       & 0.725 (0.04)   \\
        HDDT                                & 0.625 (0.04)           & 0.738 (0.04)          & 0.933 (0.02)      & 0.974 (0.00)       & 0.760 (0.02)   \\
        HDRF                                & 0.636 (0.04)           & 0.742 (0.03)          & 0.939 (0.02)      & \textbf{0.988} (0.00) & 0.760 (0.03)   \\
        CCPDT                               & 0.618 (0.05)           & 0.712 (0.05)          & 0.912 (0.03)      & 0.971 (0.00)       & 0.753 (0.01)   \\
        ANN (with 1HL)                      & 0.585 (0.03)           & 0.700 (0.03)          & 0.768 (0.05)      & 0.918 (0.02)       & 0.649 (0.03)   \\
        ANN (with 2HL)                      & 0.621 (0.02)           & 0.715 (0.02)          & 0.820 (0.04)      & 0.925 (0.01)       & 0.710 (0.03)   \\
        Superensemble Classifier            & \textbf{0.720} (0.06)  & \textbf{0.798} (0.04) & \textbf{0.964} (0.01) & 0.985 (0.00)   & \textbf{0.789} (0.05)\\
        \hline
    \end{tabular}
\end{table}

\subsubsection{Experiments with Sampling Techniques\\}

\begin{table}[H]
\scriptsize \centering \caption{AUC value (and their standard
deviation) for balanced datasets (using SMOTE and SMOTE+ENN) on
different classifiers}
    \begin{tabular}{cccccccc}
        \hline
        Data                                 & Sampling   & kNN          & CT            & RF            & ANN          & ANN          & RBFN          \\
                                             & Techniques &              &               &               & (with 1HL)   & (with 2HL)   &               \\
                                             \hline
        \multirow{2}{*}{breast cancer }      & SMOTE      & 0.700 (0.02) & 0.665 (0.05)  & \textbf{0.722} (0.01)  & 0.605 (0.07) & 0.680 (0.05) & 0.704 (0.04)  \\
                                             & SMOTE+ENN  & 0.685 (0.03) & 0.650 (0.03(  & 0.708 (0.01)  & 0.600 (0.06) & 0.652 (0.06) & 0.700 (0.03)  \\
                                             \hline
        \multirow{2}{*}{german credit card}  & SMOTE      & 0.758 (0.04) & 0.745 (0.02)  & 0.762 (0.01)  & 0.740 (0.03) & 0.735 (0.02) & 0.764 (0.00)  \\
                                             & SMOTE+ENN  & 0.760 (0.03) & \textbf{0.778} (0.02)  & 0.770 (0.02)  & 0.750 (0.02) & 0.720 (0.02) & 0.765 (0.00)  \\
                                             \hline
        \multirow{2}{*}{indian business school}& SMOTE    & 0.783 (0.01) & 0.845 (0.02)  & 0.859 (0.01)  & 0.765 (0.01) & 0.798 (0.03) & 0.905 (0.01)  \\
                                             & SMOTE+ENN  & 0.801 (0.01) & 0.850 (0.01)  & 0.875 (0.00)  & 0.798 (0.02) & 0.807 (0.03) & \textbf{0.914} (0.00)  \\
                                             \hline
        \multirow{2}{*}{page blocks}         & SMOTE      & 0.927 (0.01) & 0.965 (0.00)  & \textbf{0.967} (0.01)  & 0.933 (0.01)    & 0.942 (0.02) & 0.954 (0.00) \\
                                             & SMOTE+ENN  & 0.935 (0.01) & 0.952 (0.00)  & 0.966 (0.01)           & 0.925 (0.01)    & 0.937 (0.01) & 0.949 (0.01) \\
                                             \hline
        \multirow{2}{*}{pima diabetes}       & SMOTE      & 0.770 (0.05)          & 0.758 (0.02) & 0.753 (0.03)   & 0.698 (0.05)    & 0.719 (0.02) & 0.745 (0.03) \\
                                             & SMOTE+ENN  & \textbf{0.788} (0.04) & 0.760 (0.01) & 0.761 (0.02)   & 0.712 (0.04)    & 0.725 (0.02) & 0.748 (0.02) \\
                                             \hline
    \end{tabular}
\end{table}

The application of data preprocessing steps to balance the class
distributions is found useful to solve the curse of dimensionality.
It has an advantage that these approaches are independent of the
classifiers used \citep{batista2004study}. But the major
disadvantage is that it changes the original dataset. In this
subsection, we try to find out the effectiveness of sampling methods
on the imbalanced datasets. ``Imbalanced-learn" is a python toolbox,
used to tackle the curse of imbalanced datasets, provides
application of a wide range of available sampling methods
\citep{lemaitre2017imbalanced}. SMOTE \citep{chawla2002smote} and
SMOTE+ENN \citep{batista2004study} are among the most popular
oversampling and combination of oversampling and undersampling
methods, respectively. We implement these two sampling techniques
using \textit{``imbalanced-learn"} open-source toolbox in
python\footnote{https://github.com/scikit-learn-contrib/imbalanced-learn}
with the default parameters available in the toolbox. These methods
provide us balanced datasets with equal class distributions. On the
balanced datasets, we implement traditional models like k-NN, CT,
RF, ANN (1HL), ANN (2HL), RBFN models. Results based on the
application of different single traditional classifiers on two
different sampling techniques are reported in Table 5.

\begin{remark}
From Table 4 and Table 5, we can conclude that our proposed model
performs reasonably well as compared to baseline models as well as
single classifiers oversampling methods. We have highlighted the
highest AUC value in both the tables with dark black for all the
datasets. It is clear from computational experiments that our model
stands as very much competitive with current state-of-the-art
models.
\end{remark}

\section{Conclusions and Discussions} \label{conclusion and Discussion}

In this paper, we proposed a novel distribution-free superensemble
classifier which is a hybridization of HDDT and RBFN model for
improving predictions in binary imbalanced classification problems.
Because allocating half of the training examples to the minority
class doesn't provide the optimal solution in most of the real-life
problems \citep{weiss2003learning}. It is important to note that
``imbalanced data-oriented" algorithms perform well on the original
imbalanced datasets \citep{cieslak2012hellinger}. If one would like
to work with the original data without taking recourse to sampling,
our proposed methodology will be quite handy. Our proposed model
takes into account data imbalance and is useful for feature
selection cum classification problems having small or medium-sized
datasets. Through computational experiments, we have shown our
proposed methodology performed better as compared to the other
state-of-the-art. The model also has the desired statistical
properties like universal consistency, less tuning parameters and
achieves higher accuracy. We thereby conclude that for the
imbalanced datasets it is sufficient to use superensemble classifier
without taking recourse to sampling or any other ``imbalanced
data-oriented" single classifiers. The usefulness and effectiveness
of the model lie in its robustness and easy interpretability as
compared to complex ``black-box-like" models. The proposed
classifier will perform significantly well in imbalanced frameworks
where our job is feature selection cum classification. If feature
selection is not a part of the data analysis, like datasets from
controlled/lab experiments (for example, NASA software testing
datasets), our model will perform quite similar to HDDT and RBFN and
very less gain in prediction can be obtained. But no model can have
superior advantages, and this can be justified with \textit{no free
lunch theorem} \citep{wolpert2002supervised}. An immediate extension
of this work is possible for multi-class classification problems in
imbalanced frameworks.

\section*{Acknowledgements}
The authors are grateful to the editors and anonymous referees for
careful reading, constructive comments and insightful suggestions,
which have significantly improved the quality of the paper.

\bibliographystyle{tfcad}
\bibliography{bibliography}


\end{document}